\def\eqref#1{equation~\ref{#1}}
\def\1{\bm{1}}
\DeclareMathAlphabet{\mathsfit}{\encodingdefault}{\sfdefault}{m}{sl}
\SetMathAlphabet{\mathsfit}{bold}{\encodingdefault}{\sfdefault}{bx}{n}
\newtheorem{theorem}{Theorem}
\DeclareMathAlphabet{\mathpzc}{OT1}{pzc}{m}{it}
\DeclareMathOperator*{\concat}{\scalerel*{\Vert}{\sum}}
\crefname{equation}{}{}
\newcommand{\customfootnotetext}[2]{{
  \renewcommand{\thefootnote}{#1}
  \footnotetext[0]{#2}}}
\title{Predicting Cellular Responses with \\Variational Causal Inference and \\Refined Relational Information}
\author{%
  Yulun Wu \\
  University of California, Berkeley \\
  \footnotesize \texttt{yulun\_wu@berkeley.edu} \\
  \And
  Robert A. Barton \\
  Immunai \\
  \footnotesize \texttt{robert.barton@immunai.com} \\
  \And
  Zichen Wang \\
  Amazon \\
  \footnotesize \texttt{zichewan@amazon.com} \\
  \And
  Vassilis N. Ioannidis \\
  Amazon \\
  \footnotesize \texttt{ivasilei@amazon.com} \\
  \And
  Carlo De Donno \\
  Immunai \\
  \footnotesize \texttt{carlo.dedonno@immunai.com} \\
  \And
  Layne C. Price \\
  Amazon \\
  \footnotesize \texttt{prilayne@amazon.com} \\
  \And
  Luis F. Voloch \\
  Immunai \\
  \footnotesize \texttt{luis@immunai.com} \\
  \And
  George Karypis \\
  Amazon \\
  \footnotesize \texttt{gkarypis@amazon.com} \\
}
\begin{document}
\maketitle
\begin{abstract}
Predicting the responses of a cell under perturbations may bring important benefits to drug discovery and personalized therapeutics. In this work, we propose a novel graph variational Bayesian causal inference framework to predict a cell's gene expressions under counterfactual perturbations (perturbations that this cell did not factually receive), leveraging information representing biological knowledge in the form of gene regulatory networks (GRNs) to aid individualized cellular response predictions. Aiming at a data-adaptive GRN, we also developed an adjacency matrix updating technique for graph convolutional networks and used it to refine GRNs during pre-training, which generated more insights on gene relations and enhanced model performance. Additionally, we propose a robust estimator within our framework for the asymptotically efficient estimation of marginal perturbation effect, which is yet to be carried out in previous works. With extensive experiments, we exhibited the advantage of our approach over state-of-the-art deep learning models for individual response prediction.
\end{abstract}

\section{Introduction}
\label{intro}


Studying a cell's response to genetic, chemical, and physical perturbations is fundamental in understanding various biological processes and can lead to important applications such as drug discovery and personalized therapies. Cells respond to exogenous perturbations at different levels, including epigenetic (DNA methylation and histone modifications), transcriptional (RNA expression), translational (protein expression), and post-translational (chemical modifications on proteins). The availability of single-cell RNA sequencing (scRNA-seq) datasets has led to the development of several methods for predicting single-cell transcriptional responses~\citep{ji2021machine}. These methods fall into two broad categories. The first category~\citep{lotfollahi2019scgen, lotfollahi2020conditional, rampavsek2019dr, russkikh2020style, lotfollahi2021compositional} approaches the problem of predicting single cell gene expression response without explicitly modeling the gene regulatory network (GRN), which is widely hypothesized to be the structural causal model governing transcriptional responses of cells~\citep{emmert2014gene}. Notably among those studies, CPA~\citep{lotfollahi2021compositional} uses an adversarial autoencoder framework designed to decompose the cellular gene expression response to latent components representing perturbations, covariates and basal cellular states. CPA extends the classic idea of decomposing high-dimensional gene expression response into perturbation vectors~\citep{clark2014characteristic, clark2015principle}, which can be used for finding connections among perturbations ~\citep{subramanian2017next}. However, while CPA's adversarial approach encourages latent independence, it does not have any supervision on the counterfactual outcome construction and thus does not explicitly imply that the counterfactual outcomes would resemble the observed response distribution. Existing self-supervised counterfactual construction frameworks such as GANITE~\citep{yoon2018ganite} also suffer from this problem.

The second class of methods explicitly models the regulatory structure to leverage the wealth of the regulatory relationships among genes contained in the GRNs \citep{kamimoto2020celloracle}. By bringing the benefits of deep learning to graph data, graph neural networks (GNNs) offer a versatile and powerful framework to learn from complex graph data~\citep{bronstein2017geometric}. GNNs are the \textit{de facto} way of including relational information in many health-science applications including molecule/protein property prediction~\citep{guo2022graph,ioannidis2019graph,strokach2020fast,wu2021spatial,wang2022lm}, perturbation prediction~\citep{roohani2022gears} and RNA-sequence analysis~\citep{wang2021scgnn}. In previous work, \citet{cao2022multi} developed GLUE, a framework leveraging a fine-grained GRN with nodes corresponding to features in multi-omics datasets to improve multimodal data integration and response prediction. GEARS~\citep{roohani2022gears} uses GNNs to model the relationships among observed and perturbed genes to predict cellular response. These studies demonstrated that relation graphs are informative for predicting cellular responses. However, GLUE does not handle perturbation response prediction, and GEARS's approach to randomly map subjects from the control group to subjects in the treatment group is not designed for response prediction at an individual level (it cannot account for heterogeneity of cell states). 



GRNs can be derived from high-throughput experimental methods mapping chromosome occupancy of transcription factors, such as chromatin immunoprecepitation sequencing (ChIP-seq), and assay for transposase-accessible chromatin using sequencing (ATAC-seq). However, GRNs from these approaches are prone to false positives due to experimental inaccuracies and the fact that transcription factor occupancy does not necessarily translate to regulatory relationships~\citep{spitz2012transcription}. Alternatively, GRNs can be inferred from gene expression data such as RNA-seq~\citep{maetschke2014supervised}. It is well-accepted that integrating both ChIP-seq and RNA-seq data can produce more accurate GRNs~\citep{mokry2012integrated, jiang2018integrating, angelini2014understanding}. GRNs are also highly context-specific: different cell types can have very distinctive GRNs mostly due to their different epigenetic landscapes~\citep{emerson2002specificity, davidson2010regulatory}. Hence, a GRN derived from the most relevant biological system is necessary to accurately infer the expression of individual genes within such system.

In this work, we employed a novel variational Bayesian causal inference framework to construct the gene expressions of a cell under counterfactual perturbations by explicitly balancing individual features embedded in its factual outcome and marginal response distributions of its cell population. We integrated a gene relation graph into this framework, derived the corresponding variational lower bound and designed an innovative model architecture to rigorously incorporate relational information from GRNs in model optimization. Additionally, we propose an adjacency matrix updating technique for graph convolutional networks (GCNs) in order to impute and refine the initial relation graph generated by ATAC-seq prior to training the framework. With this technique, we obtained updated GRNs that discovered more relevant gene relations (and discarded insignificant gene relations in this context) and enhanced model performance. Besides, we propose an asymptotically efficient estimator for estimating the average effect of perturbations under a given cell type within our framework. Such marginal inference is of great biological interest because scRNA-seq experimental results are typically averaged over many cells, yet robust estimations have not been carried out in previous works on predicting cellular responses.

We tested our framework on three benchmark datasets from \citet{srivatsan2020massively}, \citet{schmidt2022crispr} and a novel CROP-seq genetic knockout screen that we release with this paper. Our model achieved state-of-the-art results on out-of-distribution predictions on differentially-expressed genes --- a task commonly used in previous works on perturbation predictions. In addition, we carried out ablation studies to demonstrate the advantage of using refined relational information 
for a better understanding of the contributions of framework components.

\section{Proposed Method}

In this section we describe our proposed model --- Graph Variational Causal Inference (graphVCI), and a relation graph refinement technique. A list of all notations can be found in Appendix~\ref{list_notations}.

\subsection{Counterfactual Construction Framework}
\label{gvci}

We define outcome $Y: \Omega \rightarrow (\mathcal{Y}, \Sigma_\mathcal{Y})$ to be a $n$-dimensional random vector (e.g. gene expressions), $X: \Omega \rightarrow (\mathcal{X}, \Sigma_\mathcal{X})$ to be a $m$-dimensional mix of categorical and real-valued covariates (e.g. cell types, donors, etc.), $T: \Omega \rightarrow (\mathcal{T}, \Sigma_\mathcal{T})$ to be a $r$-dimensional categorical or real-valued treatment (e.g. drug perturbation) on a probability space $(\Omega, \Sigma, P)$. We seek to construct an individual's counterfactual outcome under counterfactual treatment $a \in \mathcal{T}$ from two major sources of information. One source is the individual features embedded in high-dimensional outcome $Y$. The other source is the response distribution of similar subjects (subjects that have the same covariates as this individual) that indeed received treatment $a$.

\begin{figure}
    \centering
    \begin{subfigure}[b]{0.45\textwidth}
        \centering
        \begin{tikzpicture}
            \tikzstyle{main}=[circle, minimum size = 8mm, thick, draw =black!80, node distance = 8mm]
            \tikzstyle{connect}=[-latex, thick]
            \tikzstyle{box}=[rectangle, draw=black!100]
              \node[main, fill = white!100] (X) [label=below:$X$] { };
              \node[main, fill = white!100] (T) [right=of X,label=below:$T$] { };
              \node[main, fill = white!100] (G) [above=of X,label=above:$\mathpzc{G}$] { };
              \begin{scope}[on background layer]
                \node[main, fill = black!50] (tp) [right=of X,label=right:$T'$, xshift=2mm] { };
              \end{scope}
              \node[main, fill = black!50] (Z) [above=of T,label=above:$Z$] {};
              \node[main, fill = white!100] (Y) [right=of Z,label=above:$Y$] { };
              \begin{scope}[on background layer]
                \node[main, fill = black!50] (Yp) [right=of Z,label=right:$Y'$, xshift=2mm] { };
              \end{scope}
              \path (X) edge [connect] (Z)
                    (X) edge [dashed,connect] (G)
                    (X) edge [dashed,connect] (T)
                    (G) edge [connect] (Z)
            		(Z) edge [connect] (Y)
            		(T) edge [connect] (Y);
        \end{tikzpicture}
        \caption{folded}
        \label{causal_diagram-folded}
    \end{subfigure}
    \begin{subfigure}[b]{0.45\textwidth}
        \centering
        \begin{tikzpicture}
            \tikzstyle{main}=[circle, minimum size = 8mm, thick, draw =black!80, node distance = 8mm]
            \tikzstyle{connect}=[-latex, thick]
            \tikzstyle{box}=[rectangle, draw=black!100]
              \node[main, fill = white!100] (X) [label=below:$X$] { };
              \node[main, fill = white!100] (T) [left=of X,label=below:$T$] { };
              \node[main, fill = black!50] (Tp) [right=of X,label=right:$T'$] { };
              \node[main, fill = black!50] (Z) [above=of X,label=above:$Z$] {};
              \node[main, fill = white!100] (Y) [above=of T,label=above:$Y$] { };
              \node[main, fill = black!50] (Yp) [above=of Tp,label=right:$Y'$] { };
              \node[main, fill = white!100] (G) [right=of Yp,label=above:$\mathpzc{G}$] { };
              \path (X) edge [connect] (Z)
                    (X) edge [dashed,connect] (T)
                    (G) edge [connect, bend right] (Z)
                    (X) edge [dashed,connect] (Tp)
                    (X) edge [dashed,connect] (G)
            		(Z) edge [connect] (Y)
            		(Z) edge [connect] (Yp)
            		(T) edge [connect] (Y)
            		(Tp) edge [connect] (Yp);
        \end{tikzpicture}
        \caption{unfolded}
        \label{causal_diagram-unfolded}
    \end{subfigure}
    \caption{The causal relation diagram. Each individual has a feature state $Z$ following a conditional distribution $p(Z | \mathpzc{G}, X)$. Treatment $T$ (or counterfactual treatment $T'$) along with $Z$ determines outcome $Y$ (or counterfactual outcome $Y'$). In the causal diagram, white nodes are observed and dark grey nodes are unobserved; dashed relations are optional (case dependant). In the context of this paper, graph $\mathpzc{G}$ is a deterministic variable that is invariant across all individuals.}
    \label{causal_diagram}
\end{figure}
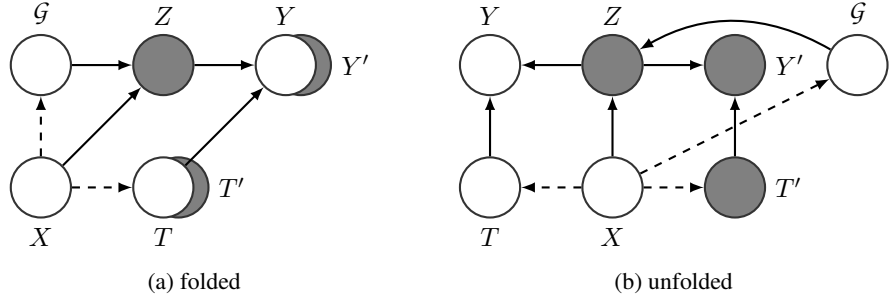

We employ the variational causal inference \citep{wu2022variational} framework to combine these two sources of information. In this framework, a covariate-dependent feature vector $Z: \Omega \rightarrow \mathbb{R}^d$ dictates the outcome distribution along with treatment $T$; counterfactuals $Y'$ and $T'$ are formulated as separate variables apart from $Y$ and $T$ with a conditional outcome distribution $p(Y' | Z, T'=a)$ identical to its factual counterpart $p(Y | Z, T=a)$ on any treatment level $a$. The learning objective is described as the individual-specific treatment effect: $J(D) = \log \left[ p (Y' | Y, X, T, T') \right]$ on $D = (X, T, T', Y, Y')$. Additionally, we assume that there is a graph structure $\mathpzc{G}=(\mathpzc{V}, \mathpzc{E})$ that governs the relations between the dimensions of $Y$ through latent $Z$, where $\mathpzc{V} \in \mathbb{R}^{n \times v}$ is the node feature matrix and $\mathpzc{E} \in \{0, 1\}^{n \times n}$ is the node adjacency matrix. For example, in the case of single-cell perturbation dataset where $Y$ is the expression counts of genes, $\mathpzc{V}$ is the gene feature matrix and $\mathpzc{E}$ is the GRN that governs gene relations. See Figure~\ref{causal_diagram} for a visualization of the causal diagram. The objective under this setting is thus formulated as
\begin{align}
J(\mathpzc{D}) = \log \left[ p (Y' | Y, \mathpzc{G}, X, T, T') \right]
\end{align}
where $\mathpzc{D} = (\mathpzc{G}, D)$. The counterfactual outcome $Y'$ is always unobserved, but the following theorem provides us a roadmap for the stochastic optimization of this objective.

\begin{theorem}
\label{elbo}
Suppose that $\mathpzc{W}=(\mathpzc{G}, X, Z, T, T', Y, Y')$ follows a causal structure defined by the Bayesian network in Figure~\ref{causal_diagram}. Then $J(\mathpzc{D})$ has the following variational lower bound:
    \begin{align}
    \label{lower-bound}
        J(\mathpzc{D}) &\geq \mathbb E_{p (Z | Y, \mathpzc{G}, X, T)} \log \left[ p (Y | Z, T) \right] - D \left[ p (Y | \mathpzc{G}, X, T) \parallel p (Y' | \mathpzc{G}, X, T') \right] \nonumber \\
        &\quad - D_\mathrm{KL} \left[ p (Z | Y, \mathpzc{G}, X, T) \parallel p (Z | Y', \mathpzc{G}, X, T') \right]
    \end{align}
where $D [ p \parallel q ] = \log p - \log q$.
\end{theorem}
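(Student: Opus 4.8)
The plan is to treat $J(\mathpzc{D})$ as a sum of two log-likelihoods and to bound each using the \emph{same} variational distribution $q(Z) := p(Z \mid Y, \mathpzc{G}, X, T)$, namely the exact factual posterior of the latent feature. The factual term $\log p(Y \mid \mathpzc{G}, X, T)$ I would handle with an exact, ELBO-style decomposition, while the counterfactual term $\log p(Y' \mid Y, \mathpzc{G}, X, T, T')$ is the only place where an inequality should enter, via Jensen. After summing the two and cancelling the shared prior contribution $\mathbb{E}_q[\log p(Z \mid \mathpzc{G}, X)]$, the three terms on the right-hand side of \eqref{lower-bound} should reassemble exactly.

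First I would read off from the Bayesian network in Figure~\ref{causal_diagram} the conditional independencies I expect to use: (i) $p(Y \mid Z, T, \mathpzc{G}, X) = p(Y \mid Z, T)$ and symmetrically $p(Y' \mid Z, T', \cdots) = p(Y' \mid Z, T')$, since $Y$ (resp.\ $Y'$) is shielded from its non-descendants by its parents $Z, T$ (resp.\ $Z, T'$); (ii) $p(Z \mid \mathpzc{G}, X, T) = p(Z \mid \mathpzc{G}, X)$, because $T$ is a non-descendant of $Z$ and the connecting paths $Z \leftarrow X \to T$ and the collider $Z \to Y \leftarrow T$ are both blocked once $X$ is conditioned on and $Y$ is left unconditioned; and (iii) $p(Z \mid Y, \mathpzc{G}, X, T, T') = p(Z \mid Y, \mathpzc{G}, X, T)$, because every $T'$-to-$Z$ path runs either through the collider $Z \to Y' \leftarrow T'$ (blocked, since $Y'$ is not conditioned on) or through $X$ (blocked by conditioning on $X$).

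For the factual term I would invoke Bayes, $q(Z) = p(Y \mid Z, T)\, p(Z \mid \mathpzc{G}, X)\, /\, p(Y \mid \mathpzc{G}, X, T)$ using (i)--(ii), take logarithms, and integrate against $q$ to obtain the exact identity $\log p(Y \mid \mathpzc{G}, X, T) = \mathbb{E}_q[\log p(Y \mid Z, T)] - \mathrm{KL}[q \parallel p(Z \mid \mathpzc{G}, X)]$. For the counterfactual term I would marginalize the latent: by (i) and (iii), $p(Y' \mid Y, \mathpzc{G}, X, T, T') = \mathbb{E}_q[p(Y' \mid Z, T')]$, and then Jensen's inequality gives $\log p(Y' \mid Y, \mathpzc{G}, X, T, T') \ge \mathbb{E}_q[\log p(Y' \mid Z, T')]$. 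Rewriting $p(Y' \mid Z, T')$ once more by Bayes as $p(Z \mid Y', \mathpzc{G}, X, T')\, p(Y' \mid \mathpzc{G}, X, T')\, /\, p(Z \mid \mathpzc{G}, X)$ splits this lower bound into $\mathbb{E}_q[\log p(Z \mid Y', \mathpzc{G}, X, T')] + \log p(Y' \mid \mathpzc{G}, X, T') - \mathbb{E}_q[\log p(Z \mid \mathpzc{G}, X)]$. Adding the two contributions and expanding the factual divergence as $-\mathrm{KL}[q \parallel p(Z \mid \mathpzc{G}, X)] = -\mathbb{E}_q[\log q] + \mathbb{E}_q[\log p(Z \mid \mathpzc{G}, X)]$, the two copies of $\mathbb{E}_q[\log p(Z \mid \mathpzc{G}, X)]$ cancel, and the surviving $-\mathbb{E}_q[\log q] + \mathbb{E}_q[\log p(Z \mid Y', \mathpzc{G}, X, T')]$ recombine into $-\mathrm{KL}[q \parallel p(Z \mid Y', \mathpzc{G}, X, T')]$, which is exactly \eqref{lower-bound}.

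The main obstacle I anticipate is not the algebra but the careful justification of (i)--(iii) from the diagram, in particular distinguishing which collider paths are active: the whole argument hinges on $Y'$ being \emph{unobserved} when marginalizing $Z$ on the factual side, so that $Z$ decouples from $T'$, whereas in the factual decomposition $Y$ \emph{is} conditioned on, which is precisely what makes $q$ the exact posterior and keeps that step an equality. Once these independencies are pinned down, the sole inequality is the single Jensen step on the counterfactual term and the rest is exact rearrangement; I would finally double-check that no implicit regularity assumption (absolute continuity of the relevant conditionals, so that the densities and both KL divergences are well defined) has been used silently.
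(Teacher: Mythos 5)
Your proposal is correct and is essentially the paper's own argument: both proofs hinge on the single Jensen step applied to $\log p(Y'\mid Y,\mathpzc{G},X,T,T') = \log \mathbb{E}_{p(Z\mid Y,\mathpzc{G},X,T)}\left[p(Y'\mid Z,T')\right]$, using the same conditional independencies read off the Bayesian network (in particular that $Z$ decouples from $T'$ when $Y'$ is unobserved), with everything else exact rearrangement. The only difference is bookkeeping: the paper factorizes the joint in one chain so that a $-\log p(Y\mid\mathpzc{G},X,T)$ term appears and is absorbed into $J(\mathpzc{D})$ on the left-hand side, whereas you split $J(\mathpzc{D})$ up front, treat the factual term as an exact ELBO identity, and instead cancel the shared prior contribution $\mathbb{E}_q\left[\log p(Z\mid\mathpzc{G},X)\right]$ — the resulting bound and the location of the sole inequality are identical.
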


Proof of the theorem can be found in Appendix~\ref{elbo_proof}. We estimate $p (Z | Y, \mathpzc{G}, X, T)$ and $p (Y | Z, T)$ (as well as $p (Y' | Z, T')$) with a neural network encoder $q_\phi$ and decoder $p_\theta$
, and optimize the following weighted approximation of the variational lower bound (notice that $\hat{p} (Y | \mathpzc{G}, X, T)$ does not impose gradient on $(p_\theta, q_\phi)$ and is thus omitted):
\begin{align}
    \label{gvci-objective}
    J(\theta, \phi) &= \mathbb E_{q_\phi (Z | Y, \mathpzc{G}, X, T)} \log \left[ p_\theta (Y | Z, T) \right] + \omega_1 \cdot \log \left[ \hat{p} (\tilde{Y}'_{\theta,\phi} | \mathpzc{G}, X, T') \right] \nonumber \\
    &\quad - \omega_2 \cdot D_\mathrm{KL} \left[ q_\phi (Z | Y, \mathpzc{G}, X, T) \parallel q_\phi (Z | \tilde{Y}'_{\theta,\phi}, \mathpzc{G}, X, T') \right]
\end{align}
where $\omega_1$, $\omega_2$ are scaling coefficients; $\tilde{Y}'_{\theta,\phi} \sim E_{q_\phi (Z | Y, \mathpzc{G}, X, T)} p_\theta (Y' | Z, T')$ and $\hat{p}$ is the covariate-specific model fit of the outcome distribution (notice that $p(Y' | \mathpzc{G}, X, T'=a)=p(Y | \mathpzc{G}, X, T=a)$ for any $a$). In our case where covariates are limited and discrete (cell types and donors), we simply let $\hat{p}(Y | \mathpzc{G}, X, T)$ be the smoothened empirical distribution of $Y$ stratified by $X$ and $T$ (notice that $\mathpzc{G}$ is fixed across subjects). Generally, one can train a discriminator $\hat{p}(\cdot | Y, \mathpzc{G}, X, T)$ with the adversarial approach~\citep{goodfellow2014generative} and use $\log \left[ \hat{p}(1 | \tilde{Y}'_{\theta,\phi}, \mathpzc{G}, X, T') \right]$ for $\log \left[ \hat{p} (\tilde{Y}'_{\theta,\phi} | \mathpzc{G}, X, T') \right]$ if $p(Y | \mathpzc{G}, X, T)$ is hard to fit. See Figure~\ref{gvci-structure} for an overview of the model structure. Note that the decoder estimates the conditional outcome distribution of $Y'$, in which case $T'$ need not necessarily be sampled according to a certain true distribution $p(T' | X)$ during optimization.

We refer to the negative of the first term in Equation~\ref{gvci-objective} as reconstruction loss, the negative of the second term as distribution loss, and the positive of the third term as KL-divergence. As discussed in \citet{wu2022variational}, the negative KL-divergence term in the objective function encourages the preservation of individuality in counterfactual outcome constructions.

\begin{figure}
    \centering
    \includegraphics[width=0.6\linewidth]{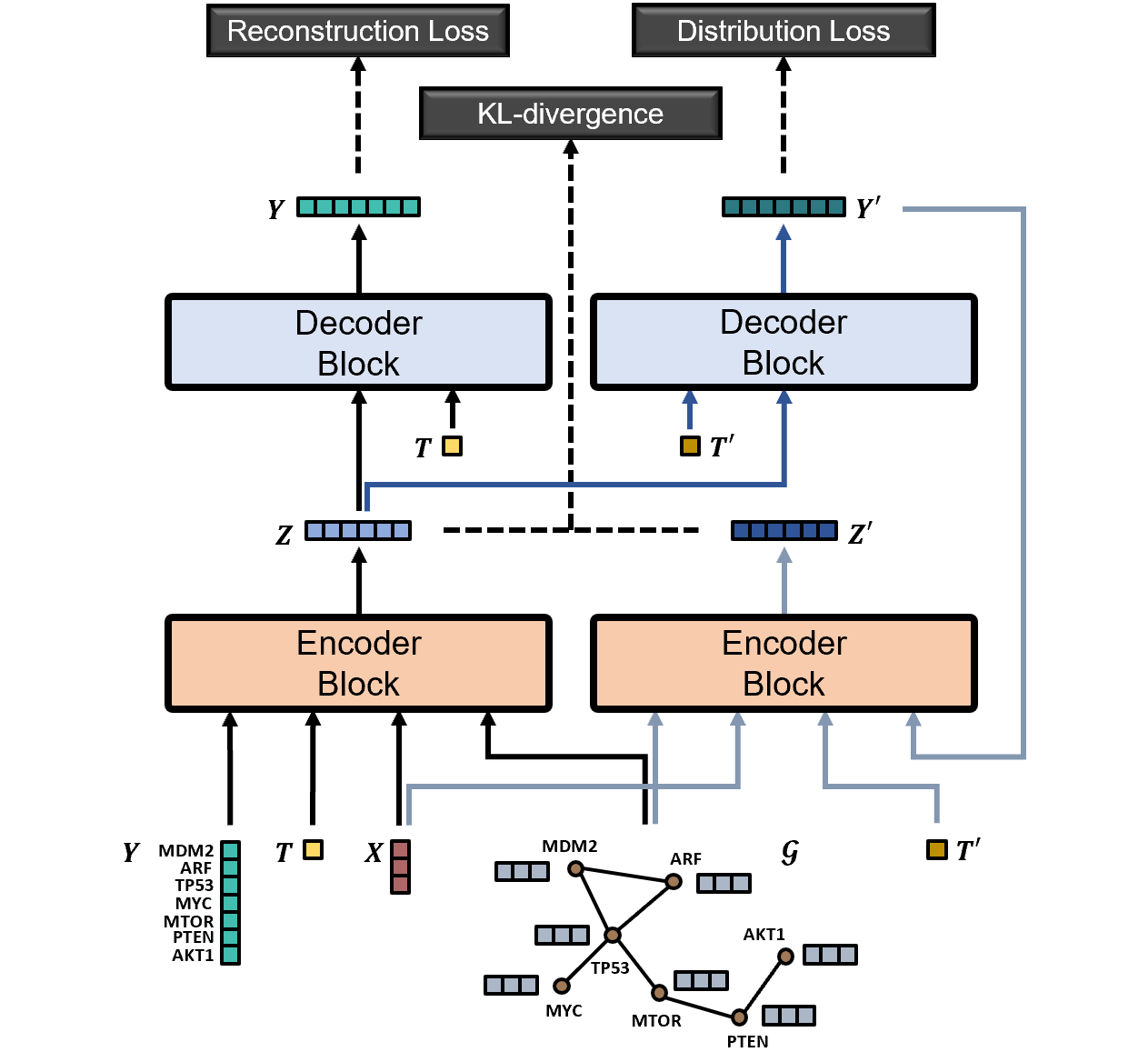}
    \caption{Model workflow --- variational causal perspective. In a forward pass, the graphVCI encoder takes graph $\mathpzc{G}$ (e.g. gene relation graph), outcome $Y$ (e.g. gene expressions), covariates $X$ (e.g. cell types, donors, etc.) and treatment $T$ (e.g. drug perturbation) as inputs and generates latent $Z$; $(Z, T)$ and $(Z, T')$ where $T'$ is a randomly sampled counterfactual treatment are separately passed into the graphVCI decoder to attain reconstruction of $Y$ and construction of counterfactual outcome $Y'$; $Y'$ is then passed back into the encoder along with $\mathpzc{G}$, $X$, $T'$ to attain counterfactual latent $Z'$. The objective consists of the reconstruction loss of $Y$, the distribution loss of $Y'$ and the KL-divergence between the conditional distributions of $Z$ and $Z'$.}
    \label{gvci-structure}
\end{figure}

\paragraph{Marginal Effect Estimation} Although perturbation responses at single cell resolution offers microscopic view of the biological landscape, oftentimes it is fundamental to estimate the average population effect of a perturbation in a given cell type. Hence in this work, we developed a robust estimation for the causal parameter $\Psi(p) = \mathbb E_p(Y' | X=c, T'=a)$ --- the marginal effect of treatment $a$ within a covariate group $c$. We propose the following estimator that is asymptotically efficient when $\mathbb E_{p_\theta}(Y' | \tilde{Z}_{k, \phi}, T_k'=a)$ is estimated consistently and some other regularity conditions~\citep{van2006targeted} hold:
\begin{align}
    \hat{\Psi}_{\theta, \phi} &= \frac{1}{n_{a,c}} \sum_{k=1_{a,c}}^{n_{a,c}} \left\{ Y_k - \mathbb E_{p_\theta}(Y' | \tilde{Z}_{k, \phi}, T_k'=a) \right\} + \frac{1}{n_c} \sum_{k=1_c}^{n_c} \left\{ \mathbb E_{p_\theta}(Y' | \tilde{Z}_{k, \phi}, T_k'=a) \right\}
    \label{robust_est}
\end{align}
where $(Y_k, X_k, T_k)$ are the observed variables of the $k$-th individual; $\tilde{Z}_{k,\phi} \sim q_\phi (Z | Y_k, \mathpzc{G}, X_k, T_k)$; $(1_c, \dots, n_c)$ are the indices of the observations having $X=c$ and $(1_{a,c}, \dots, n_{a,c})$ are the indices of the observations having both $T=a$ and $X=c$. The derivation of this estimator can be found in Appendix~\ref{ATT-derivation} and some experiment results can be found in Appendix~\ref{ATT-experiment}. Intuitively, we use the prediction error of the model on the observations that indeed have covariate level $c$ and received treatment $a$ to adjust the empirical mean. Prior works in cellular response studies only use the empirical mean estimator over model predictions to estimate marginal effects.

\begin{figure}
  \centering
  \includegraphics[width=0.9\linewidth]{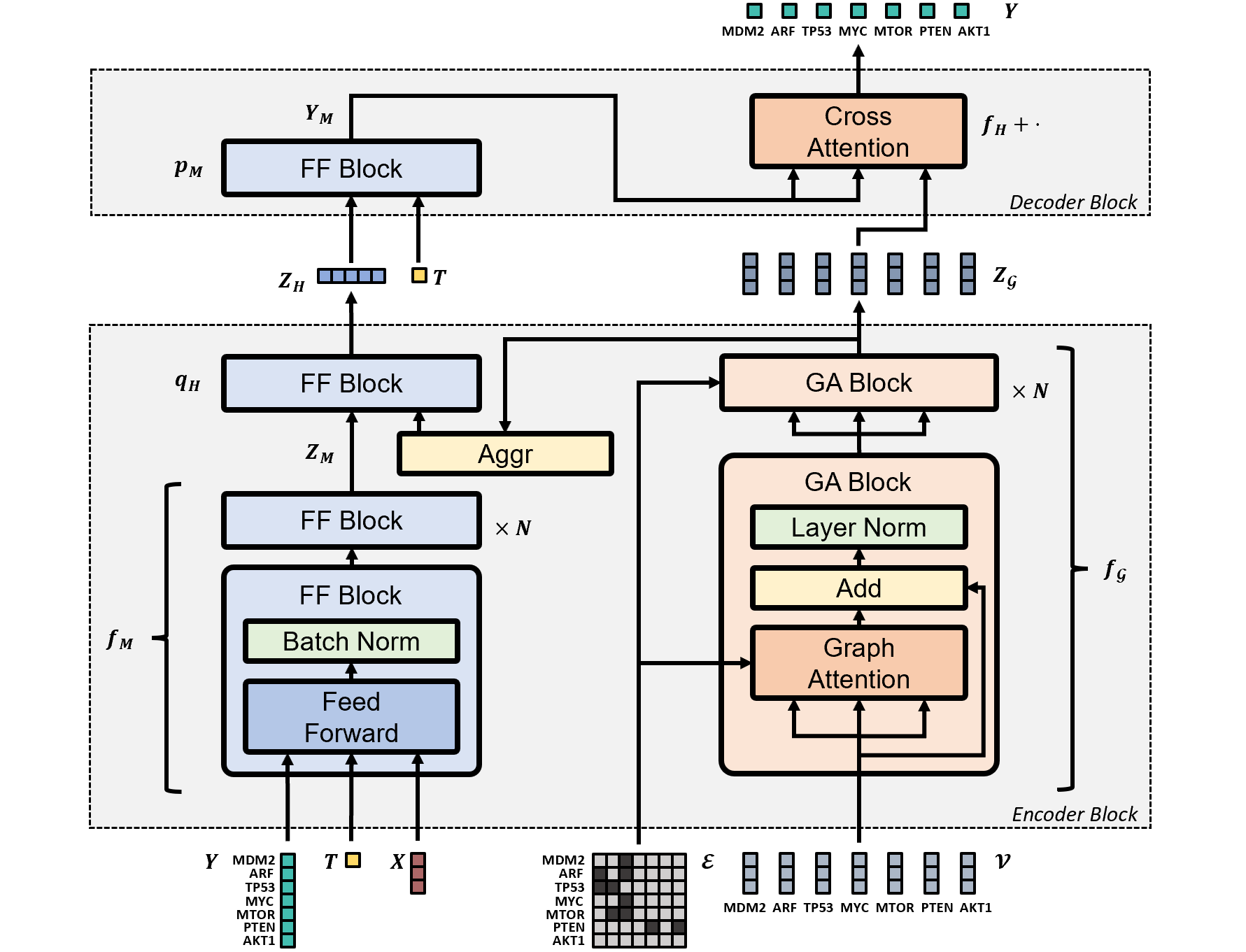}
  \caption{Model architecture --- graph attentional perspective. Structure of the graphVCI encoder and decoder defined by Equations~\cref{enc1,enc2,enc3,enc4,dec1,dec2}. Note that in the case of single-cell perturbation datasets, the graph inputs are fixed across samples and graph attention can essentially be reduced to weighted graph convolution.}
  \label{fig:gvci}
\end{figure}

\subsection{Incorporating Relational Information}
\label{graph-integration}

Since the elements in the outcome $Y$ are not independent, we aim to design a framework that can exploit predefined relational knowledge among them. In this section, we demonstrate our model structure for encoding and aggregating relation graph $\mathpzc{G}$ within the graphVCI framework. Denote deterministic models as $f_\cdot$, probabilistic models (output probability distributions) as $q_\cdot$ and $p_\cdot$. We construct feature vector $Z$ as an aggregation of two latent representations:
\begin{align}
    & Z = (Z_\mathpzc{G}, Z_H) \label{enc1} \\
    & \quad Z_H \sim q_H \left( Z_M, \mathrm{aggr}_\mathpzc{G}(Z_\mathpzc{G}) \right) \label{enc2} \\ 
    & \quad Z_M = f_M (Y, X, T) \label{enc3} \\ 
    & \quad Z_\mathpzc{G} = f_\mathpzc{G} (\mathpzc{G}) \label{enc4}
\end{align}
where $\mathrm{aggr}_\mathpzc{G}$ is a node aggregation operation such as $\mathrm{sum}$, $\mathrm{max}$ or $\mathrm{mean}$. The optimization of $q_\phi$ can then be performed by optimizing the MLP encoder $f_{M, \phi_1}: \mathbb{R}^{n+m+r} \rightarrow \mathbb{R}^{d}$, the GNN encoder $f_{\mathpzc{G}, \phi_2}: \mathbb{R}^{n \times v} \rightarrow \mathbb{R}^{n \times d_\mathpzc{G}}$ and the encoding aggregator $q_{H, \phi_3}: \mathbb{R}^{d + d_\mathpzc{G}} \rightarrow ([0, 1]^{d})^\Sigma$ where $\phi = (\phi_1, \phi_2, \phi_3)$. 
We designed such construction so that $Z$ possesses a node-level graph embedding $Z_\mathpzc{G}$ that enables more involved decoding techniques than generic MLP vector decoding, while the calculation of term $D_\mathrm{KL} \left[ q_\phi (Z | Y, \mathpzc{G}, X, T) \parallel q_\phi (Z | Y', \mathpzc{G}, X, T') \right]$ can also be reduced to the KL-divergence between the conditional distributions of two graph-level vector representations $Z_H$ and $Z'_H$ (since $Z_\mathpzc{G}$ is deterministic and invariant across subjects). In decoding, we construct
\begin{align}
    & Y = \left( \concat_{i=1}^n f_H \left( Z_{\mathpzc{G} (i)}, Y_M \right) \right)^\top \cdot Y_M \label{dec1} \\
    & \quad Y_M \sim p_M (Z_H, T) \label{dec2}
\end{align}
where $\concat$ represents vector concatenation and optimize $p_\theta$ by optimizing the MLP decoder $p_{M, \theta_1}: \mathbb{R}^{d+r} \rightarrow ([0, 1]^{d})^\Sigma$ and the decoding aggregator $f_{H, \theta_2}: \mathbb{R}^{d + d_\mathpzc{G}} \rightarrow \mathbb{R}^{d}$ where $\theta = (\theta_1, \theta_2)$. The decoding aggregator maps graph embedding $Z_{\mathpzc{G} (i)}$ of the $i$-th node along with feature vector $Y_M$ to outcome $Y_{(i)}$ of the $i$-th dimension, for which we use an attention mechanism and let $Y_{(i)} = \mathrm{att}_{\theta_2}(Z_{\mathpzc{G} (i)}, Y_M)^\top \cdot Y_M$ where $\mathrm{att}(Q_{(i)},K)$ gives the attention score for each of $K$'s feature dimensions given a querying node embedding vector $Q_{(i)}$ (i.e., a row vector of $Q$). One can simply use a key-independent attention mechanism $Y_{(i)} = \mathrm{att}_{\theta_2}(Z_{\mathpzc{G} (i)})^\top \cdot Y_M$ if GPU memory is of concern. See Figure~\ref{fig:gvci} for a visualization of the encoding and decoding models. A complexity analysis of the model can be found in Appendix~\ref{complexity_analysis}.

\subsection{Relation Graph Refinement}
\label{section:refinement}


Since GRNs are often highly context-specific~\citep{oliver2000guilt, romero2012comparative}, and experimental methods such as ATAC-seq and Chip-seq are prone to false positives, we provide an option to impute and refine a prior GRN by learning from the expression dataset of interest
. 
We propose an approach to update the adjacency matrix in GCN training while maintaining sparse graph operations, and use it in an auto-regressive pre-training task to obtain an updated GRN.

Let $g(\cdot)$ be a GCN with learnable edge weights between all nodes. We aim to acquire an updated adjacency matrix by thresholding updated edge weights post optimization. In practice, such GCNs with complete graphs performed poorly on our task and had scalability issues. Hence we apply dropouts to edges in favor of the initial graph --- edges present in $\tilde{\mathpzc{E}}=\lvert \mathpzc{E} - I \rvert +I$ ($I$ is the identity matrix) are accompanied with a low dropout rate $r_l$ and edges not present in $\tilde{\mathpzc{E}}$ with a high dropout rate $r_h$ ($r_l \ll r_h$). A graph convolutional layer of $g(\cdot)$ is then given as
\begin{align}
    H^{l+1} = \sigma(\mathrm{softmax}_r(M \odot L) H^l \Theta^l)
\end{align}
where $L \in \mathbb{R}^{n \times n}$ is a dense matrix containing logits of the edge weights and $M \in \mathbb{R}^{n \times n}$ is a sparse mask matrix where each element $M_{i,j}$ is sampled from $Bern(I(\tilde{\mathpzc{E}}_{i,j}=1) r_l + I(\tilde{\mathpzc{E}}_{i,j}=0) r_h)$ in every iteration; $\odot$ is element-wise matrix multiplication; $\mathrm{softmax}_r$ is row-wise softmax operation; $H^l \in \mathbb{R}^{n \times d_l}$, $H^{l+1} \in \mathbb{R}^{n \times d_{l+1}}$ are the latent representations of $\mathpzc{V}$ after the $l$-th, $(l+1)$-th layer; $\Theta^l \in \mathbb{R}^{d_l \times d_{l+1}}$ is the weight matrix of the $(l+1)$-th layer; $\sigma$ is a non-linear function. The updated adjacency matrix $\hat{\mathpzc{E}} \in \mathbb{R}^{n \times n}$ is acquired by rescaling and thresholding the unnormalized weight matrix $W = \mathrm{exp}(L)$ after optimizing $g(\cdot)$:
\begin{align}
    \hat{\mathpzc{E}}_{i,j} = \mathrm{sgn} \left| (1+W_{i,j}^{-1})^{-1} - \alpha \right|
\end{align}
where $\alpha$ is a threshold level. We define $\tilde{W} \in \mathbb{R}^{n \times n}$ to be the rescaled weight matrix having $\tilde{W}_{i,j} = (1+W_{i,j}^{-1})^{-1}$. With this design, the convolution layer operates on sparse graphs which benefit performance and scalability, while each absent edge in the initial graph still has an opportunity to come into existence in the updated graph.

We use this approach to obtain an updated GRN $\hat{\mathpzc{E}}$ prior to main model training presented in the previous sections. We train $g(\cdot): \mathbb{R}^{n \times (1+v+m)} \rightarrow \mathbb{R}^{n \times 1}$ on a simple node-level prediction task where the output of the $i$-th node is the expression $Y_{(i)}$ of the $i$-th dimension; the input of the $i$-th node is a combination $O'_{(i)}=(Y_{(i)}, \mathpzc{V}_{(i)}, X)$ with expression $Y_{(i)}$, gene features $\mathpzc{V}_{(i)}$ of the $i$-th gene 
and cell covariates $X$. Essentially, we require the model to predict the expression of a gene (node) from its neighbors in the graph. This task is an effective way to learn potential connections in a gene regulatory network as regulatory genes should be predictive of their targets~\citep{kamimoto2020celloracle}. The objective is a lasso-style combination of reconstruction loss and edge penalty:
\begin{align}
    J(g) = -\lVert g(O) - Y \rVert_{L^2} - \omega \cdot \lVert \tilde{W} \rVert_{L^1}
\end{align}
where $\omega$ is a scaling coefficient. Note that although $\mathpzc{V}_{(i)}$ is not cell-specific and $X$ is not gene-specific, the combination of $(\mathpzc{V}_{(i)}, X)$ forms a unique representation for each gene of each cell type. We employ such dummy task since node-level predictions with strong self-connections grants interpretability, but additional penalization on the diagonal of $\tilde{W}$ can be applied if one wishes to weaken the self-connections. Otherwise, although self-connections are the strongest signals in this task, dropouts and data correlations will still prevent them from being the only notable signals. See Figure~\ref{fig:grn-refinement} for an example of an updated GRN in practice.

\begin{figure}
  \centering
  \begin{subfigure}[b]{0.45\textwidth}
   \centering
   \includegraphics[width=\linewidth]{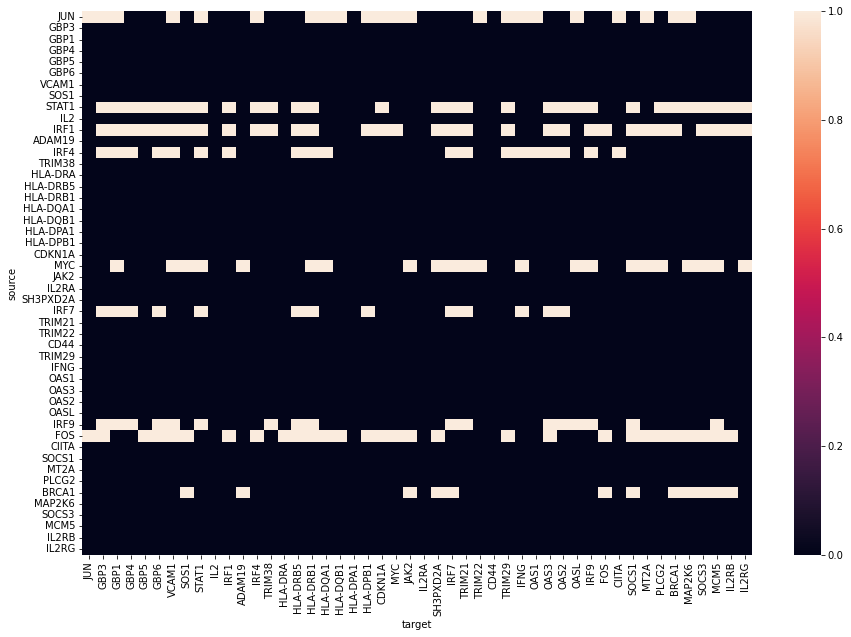}
   \caption{original}
   \label{original-grn}
  \end{subfigure}
  \hspace{5mm}
  \begin{subfigure}[b]{0.45\textwidth}
   \centering
   \includegraphics[width=\linewidth]{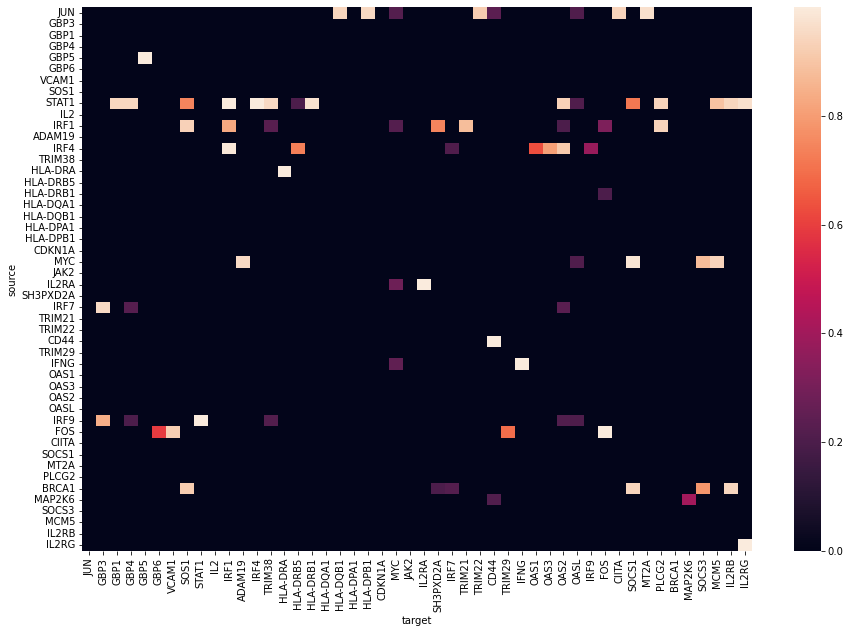}
   \caption{updated}
   \label{updated-grn}
  \end{subfigure}
  \caption{An example of an updated gene regulatory network $U$ (where $U_{i,j} = | \tilde{W}_{i,j} - \alpha |$ with $\alpha=0.2$) after refining the original ATAC-seq-based network \cite{kamimoto2020celloracle} using the \citet{schmidt2022crispr} dataset. Source nodes are shown as rows and targets are shown as columns for key immune-related genes. The learned edge weights in (b) recapitulate known biology such as STAT1 regulating IRF1.  Also note that while many of the edges are present in the original ATAC-seq data from (a), we see some novel edges in (b) such as IFNg regulating MYC~\citep{ramana2000regulation}.}
  \label{fig:grn-refinement}
\end{figure}

\section{Experiments}
We tested our framework on three datasets in experiments. We employ the publicly available sci-Plex dataset from \citet{srivatsan2020massively} (\textbf{Sciplex}) and CRISPRa dataset from \citet{schmidt2022crispr} (\textbf{Marson}). Sci-Plex is a method for pooled screening that relies on nuclear hashing, and the Sciplex dataset consists of three cancer cell lines (A549, MCF7, K562) treated with 188 compounds. Marson contains perturbations of 73 unique genes where the intervention served to increase the expression of those genes. In addition, we open source in this work a new dataset (\textbf{L008}) designed to showcase the power of our model in conjunction with modern genomics.

\paragraph{L008 dataset} We used the CROP-seq platform~\citep{shifrut2018genome} to knock out 77 genes related to the interferon gamma signaling pathway in CD4$^{+}$ T cells. They include genes at multiple steps of the interferon gamma signaling pathway such as JAK1, JAK2 and STAT1. We hope that by including multiple such genes, machine learning models will learn the signaling pathway in more detail.

\paragraph{Baseline} We compare our framework to three state-of-the-art self-supervised models for individual counterfactual outcome generation --- \textbf{CEVAE}~\citep{louizos2017causal}, \textbf{GANITE}~\citep{yoon2018ganite} and \textbf{CPA}~\citep{lotfollahi2021learning}, along with the non-graph version of our framework \textbf{VCI} \citep{wu2022variational}. To give an idea how well these models are doing, we also compare them to a generic autoencoder (\textbf{AE}) with covariates and treatment as additional inputs, which serves as an ablation study for all other baseline models. For this generic approach, we simply plug in counterfactual treatments instead of factual treatments during test time.

\subsection{Out-of-distribution Predictions}
\label{sec:ood-predictions}

We evaluate our model and benchmarks on a widely accepted and biologically meaningful metric --- the $R^2$ (coefficient of determination) of the average prediction against the true average from the out-of-distribution (OOD) set (see Appendix~\ref{appendix-ood}) on all genes and differentially-expressed (DE) genes (see Appendix~\ref{appendix-deg}). Same as \citet{lotfollahi2021compositional}, we calculate the $R^2$ for each perturbation of each covariate level (e.g. each cell type of each donor), then take the average and denote it as $\bar{R^2}$. Table~\ref{ood-prediction} shows the mean and standard deviation of $\bar{R^2}$ for each model over 5 independent runs. Training setups can be found in Appendix~\ref{appendix-train}.

\begin{table}[ht!] 
  \caption{$\bar{R^2}$ of OOD predictions}
  \label{ood-prediction}
  \centering
  \scalebox{0.8}{
      \begin{tabular}{lllllll}
        \toprule
        & \multicolumn{2}{c}{Sciplex} & \multicolumn{2}{c}{Marson} & \multicolumn{2}{c}{L008}\\
        \cmidrule(r){2-3} \cmidrule(r){4-5} \cmidrule(r){6-7}
        & all genes & DE genes & all genes & DE genes & all genes & DE genes \\
        \midrule
        AE\textsuperscript{$\S$} & 0.740 $\pm$ 0.043 & 0.421 $\pm$ 0.021 & 0.804 $\pm$ 0.020 & 0.448 $\pm$ 0.009 & 0.948 $\pm$ 0.010 & 0.729 $\pm$ 0.041 \\ 
        CEVAE & 0.760 $\pm$ 0.019 & 0.436 $\pm$ 0.014 & 0.795 $\pm$ 0.014 & 0.424 $\pm$ 0.015 & 0.941 $\pm$ 0.010 & 0.632 $\pm$ 0.034 \\ 
        GANITE\textsuperscript{$\P$} & 0.751 $\pm$ 0.013 & 0.417 $\pm$ 0.014 & 0.795 $\pm$ 0.017 & 0.443 $\pm$ 0.025 & 0.946 $\pm$ 0.009 & 0.730 $\pm$ 0.030 \\ 
        CPA & 0.836 $\pm$ 0.002 & 0.474 $\pm$ 0.014 & 0.876 $\pm$ 0.005 & 0.549 $\pm$ 0.019 & 0.962 $\pm$ 0.005 & 0.849 $\pm$ 0.021 \\
        VCI & 0.828 $\pm$ 0.006 & 0.492 $\pm$ 0.011 & 0.884 $\pm$ 0.010 & 0.604 $\pm$ 0.044 & 0.962 $\pm$ 0.002 & \textbf{0.865} $\pm$ 0.038 \\
        \hdashline
        graphVCI & \textbf{0.841} $\pm$ 0.002 & \textbf{0.497} $\pm$ 0.014 & \textbf{0.892} $\pm$ 0.003 & \textbf{0.642} $\pm$ 0.016 & \textbf{0.965} $\pm$ 0.002 & 0.831 $\pm$ 0.018 \\
        \bottomrule
      \end{tabular}
  }
\end{table}
\customfootnotetext{$\S$}{Autoencoder with covariates and treatment as additional inputs.}
\customfootnotetext{$\P$}{GANITE's counterfactual block. GANITE's counterfactual generator does not scale with a combination of high-dimensional outcome and multi-level treatment, hence we made the same adaptation as \citet{wu2022variational}.}

As can be seen from these experiments, our variational Bayesian causal inference framework with refined relation graph achieved a significant advantage over other models on all genes of the OOD set, and a remarkable advantage on DE genes on Marson. Note that losses were evaluated on all genes during training and DE genes were not being specifically optimized in these runs.

 We also examined the predicted distribution of gene expression for various genes and compared to experimental results. Fig~\ref{fig:bio-analysis} shows an analysis of the CRISPRa dataset where MAP4K1 and GATA3 were overexpressed in CD8$^{+}$ T cells \citep{schmidt2022crispr}, but these cells were not included in the model's training set.  Nevertheless, the model's prediction for the distribution of gene expression frequently matches the ground truth.  Quantitative agreement can be obtained from Table \ref{ood-prediction}.

\begin{figure}[ht!]
  \centering
  \begin{subfigure}[b]{0.45\textwidth}
   \centering
   \includegraphics[width=\linewidth]{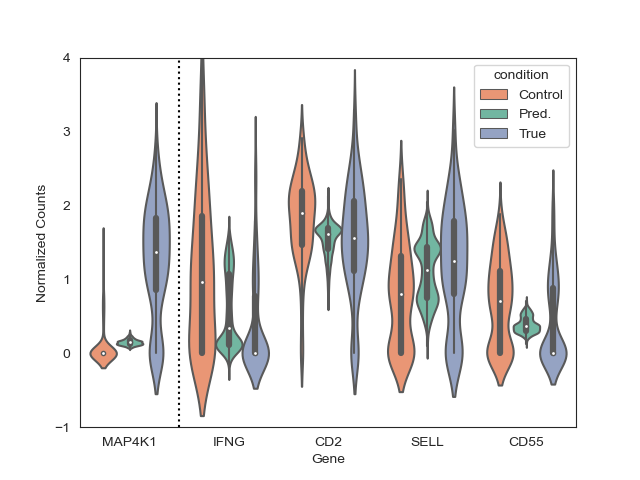}
   \caption{MAP4K1 Overexpression}
   \label{MAP4K1}
  \end{subfigure}
  \begin{subfigure}[b]{0.45\textwidth}
   \centering
   \includegraphics[width=\linewidth]{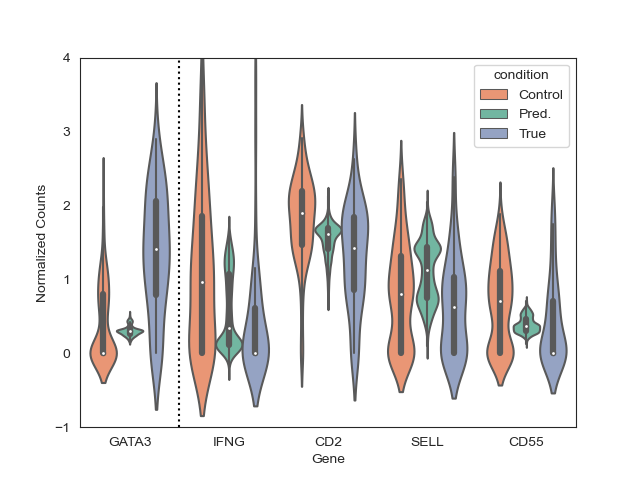}
   \caption{GATA3 Overexpression}
   \label{GATA3}
  \end{subfigure}
  \caption{Model predictions versus true distributions for overexpression of genes in CRISPRa experiments \citep{schmidt2022crispr}.  For two perturbations in CD8$^{+}$ T cells, (a) MAP4K1 overexpression and (b) GATA3 overexpression, we plot the distribution of gene expressions for unperturbed cells (``Control"), the model's prediction of perturbed gene expressions using unperturbed cells as factual inputs (``Pred"), and the true gene expressions for perturbed cells (``True"). The predicted distributional shift relative to control often matches the direction of the true shift.}
  \label{fig:bio-analysis}
\end{figure}

For graphVCI, we used the key-dependent attention (see Section~\ref{graph-integration}) for the decoding aggregator in all runs and there were a few interesting observations we found in these experiments. Firstly, the key-independent attention is more prone to the quality of the GRN and exhibited a more significant difference on model performance with the graph refinement technique compared to the key-dependent attention. Secondly, with graphVCI and the key-dependent attention, we are able to get stable performances across runs while setting $\omega_1$ to be much higher than that of VCI.

\subsection{Graph Evaluations}

In this section, we perform some ablation studies and analysis to validate the claim that the adjacency updating procedure improves the quality of the GRN. We first examine the performance impact of the refined GRN over the original GRN derived from ATAC-seq data. For this purpose, we used the key-independent attention decoding aggregator and conducted 5 separate runs with the original GRN and refined GRN on the Marson dataset (Fig~\ref{stat1}). We found that the refined GRN helps the graphVCI to learn faster and achieve better predictive performance of gene expression, suggesting the updated graph contains more relevant gene connections for the model to predict counterfactual expressions. Note that there is a difference in length of the curves and bands in Fig~\ref{stat1} because we applied early stopping criteria to runs similar to CPA.

\begin{figure}[ht!]
  \centering
  \begin{subfigure}[b]{0.45\textwidth}
   \centering
   \includegraphics[width=\linewidth]{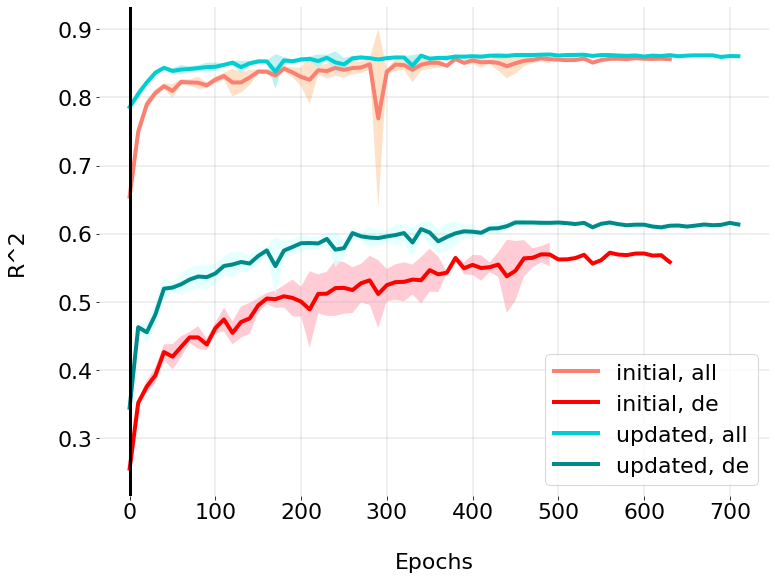}
   \caption{runs with different GRNs}
   \label{stat1}
  \end{subfigure}
  \hspace{5mm}
  \begin{subfigure}[b]{0.45\textwidth}
   \centering
   \includegraphics[width=\linewidth]{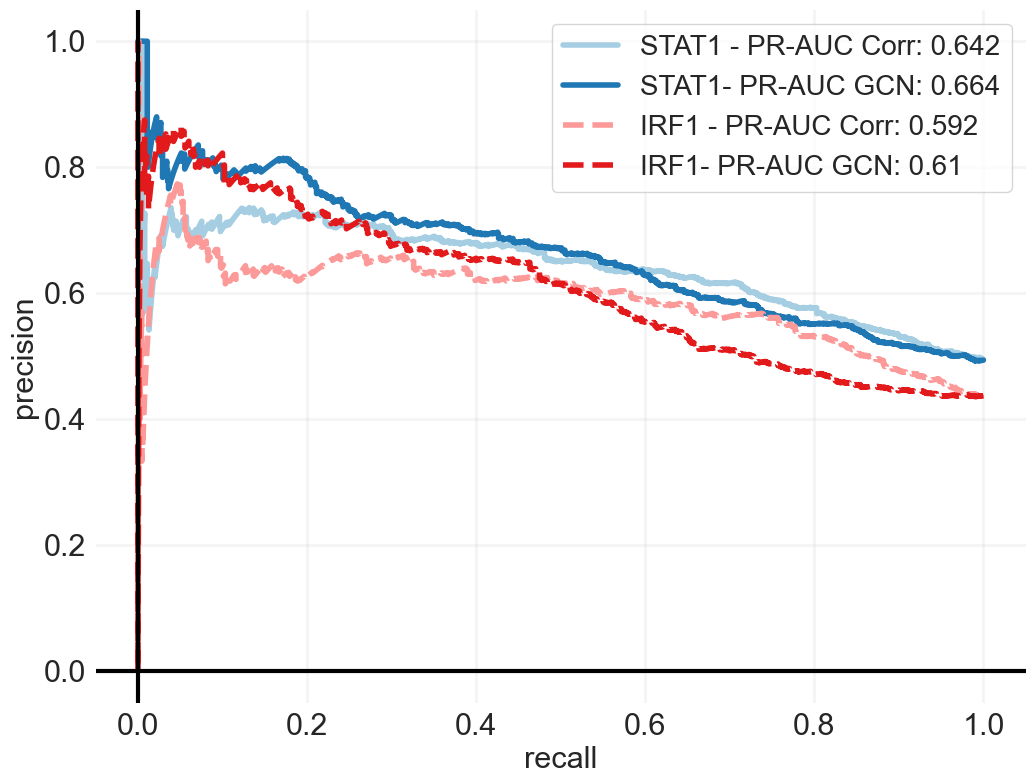}
   \caption{Agreement of learned graph with TF targets}
   \label{irf1}
  \end{subfigure}
  \caption{(a) Graph refinement 
  improves model training. We learned a GRN with graph refinement as described in section \ref{section:refinement}, and edges were retained using a threshold of $\alpha=0.3$. Following the refinement, the model is better able to reconstruct all genes and differentially expressed (DE) genes during training as can be seen in a graph of $R^{2}$ vs. number of training epochs. (b) We examine whether the edges learned in refinement are accurate by comparing to a database of targets for two important TFs, STAT1 and IRF1. Refined edges between these TFs and their targets agree well with the interactions in the database, according to a precision-recall curve where we treat the database as the true label and edge weights as a probability of interaction.  We also compare the refined edges (``GCN") to a gene-gene correlation benchmark (``Corr") and find that the refined graph can outperform the benchmark.}
  \label{fig:edge-weights}
 \end{figure}

Next, we compared the learned edge weights from our method to known genetic interactions from the ChEA transcription factor targets database \citep{lachmann2010chea}.  We treat edge weights following graph refinement as a probability of an interaction and treat the known interactions in the database as ground truth for two key TFs, STAT1 and IRF1. We found the refined GRN obtained by graph refinement is able to place higher weights on known targets than a more naive version of the GRN based solely on gene-gene correlation in the same dataset (Fig~\ref{irf1}). The improvement is particularly noticeable in the high-precision regime where we expect the ground truth data is more accurate, since it is expected that a database based on ChIP-seq would contain false positives.

\section{Conclusion}
In this paper, we developed a theoretically grounded novel model architecture to combine deep graph representation learning with variational Bayesian causal inference for the prediction of single-cell perturbation effect. We proposed an adjacency matrix updating technique producing refined relation graphs prior to model training that was able to discover more relevant relations to the data and enhance model performances. Experimental results showcased the advantage of our framework compared to state-of-the-art methods. In addition, we included ablation studies and biological analysis to generate sensible insights. Further studies could be conducted regarding complete out-of-distribution prediction --- the prediction of treatment effect when the treatment is completely held out, by rigorously incorporating treatment relations into our framework on top of outcome relations.


\subsubsection*{Acknowledgments}
We thank Balasubramaniam Srinivasan, Drausin Wulsin, Meena Subramaniam, and Maxime Dhainaut for the insightful discussions. Work by author Robert A. Barton was done prior to joining Amazon.

\bibliography{iclr2023_conference}
\bibliographystyle{iclr2023_conference}

\vfill
\pagebreak

\appendix
\section*{Appendix}

\section{List of Notations}
\label{list_notations}

\begin{tabular}{p{0.75in}p{4.25in}}
$Y$ & outcome random vector / gene expressions \\
$X$ & covariates random vector (or variable) / cell type, donor, replicate, etc. \\
$Z$ & latent features random vector \\
$T$ & treatment random vector (or variable) / perturbation \\
$V'$ & counterfactual random vector of a random vector $V$ \\
$D$ & concatenation of all vectors (and variables) above \\
$n$ & outcome dimensions / number of genes \\
$m$ & combined dimensions of covariates \\
$r$ & treatment dimensions / number of perturbations \\
$\Sigma$ & event space \\
$P$ & probability measure on $\Sigma$ \\
$E_V$ & measurable space generated by a random vector (or variable) $V$ on $\Omega$ \\
$c$ & a covariate value / a sample in $\mathcal{X}$ \\
$a$ & a treatment value / a sample in $\mathcal{T}$ \\
$\mathpzc{G}$ & relation graph / gene graph \\
$\mathpzc{E}$ & adjacency matrix / gene relation network \\
$\mathpzc{V}$ & feature matrix / gene feature matrix \\
$\mathpzc{D}$ & concatenation of $D$ and $\mathpzc{G}$ (see $\mathpzc{G}$ as a fixed random vector with its components flattened) \\
$v$ & feature dimensions of $\mathpzc{G}$ / diagonal length of $\mathpzc{E}$ / column dimensions of $\mathpzc{V}$ \\
$A_{(i)}$ & the $i$-th row of a matrix $A$ \\
$A_{i,j}$ & the element on the $i$-th row and $j$-th column of a matrix $A$ \\
$J$ & objective function \\
$D_\mathrm{KL}$ & Kullback–Leibler divergence \\
$\omega_\cdot$ & scaling coefficient \\
$p_\theta$ & estimating model for $p(Y | Z, T)$ (and $p(Y' | Z, T')$) \\
$q_\phi$ & estimating model for $q(Z | Y, \mathpzc{G}, X, T)$ \\
$\tilde{Y}'_{\theta,\phi}$ & a sample from $E_{q_\phi (Z | Y, \mathpzc{G}, X, T)} p_\theta (Y' | Z, T')$ \\
$\tilde{Z}'_\phi$ & a sample from $q_\phi (Z | Y, \mathpzc{G}, X, T)$ \\
$p_M$ & true model for the distribution of latent $Y_M$ \\
$q_H$ & true model for the distribution of latent $Z_H$ \\
$f_H$ & true model for the attention score on $Y_M$ \\
$f_M$ & true model for latent $Z_M$ \\
$f_\mathpzc{G}$ & true model for latent $Z_\mathpzc{G}$ \\
$h_{a, \beta}$ & estimating model for a true model $h_a$ with a parameterization $\beta$ \\
$d$ & dimensions of $Z_M$, $Z_H$, $Y_M$ \\
$d_\mathpzc{G}$ & feature dimensions of $Z_\mathpzc{G}$ \\
$\mathrm{att}$ & alias for $f_H$ \\
$\mathrm{aggr}_\mathpzc{G}$ & a graph node aggregation operation / a matrix row aggregation operation \\
$\mathrm{softmax}_r$ & row-wise softmax function \\
$\odot$ & element-wise matrix multiplication \\
$\sigma$ & a non-linear function \\
$g$ & graph convolution network \\
$r_\cdot$ & a probability value \\
$\alpha$ & a threshold value for probability values \\
$I$ & identity matrix \\
$M$ & mask matrix / probability matrix \\
$L$ & edge weight logits matrix \\
$W$ & unnormalized edge weight matrix \\
$\tilde{W}$ & rescaled weight matrix / matrix after applying element-wise $\mathrm{sigmoid}$ function on $L$ \\
$U$ & matrix after thresholding $\tilde{W}$ \\
$O$ & input node feature matrix to $g$ in the graph refinement task \\
$H^l$ & latent representation matrix after the $l$-th layer of $g$ \\
$\Theta^l$ & weight matrix of the $l$-th layer of $g$ \\
$\Psi$ & causal parameter \\
\end{tabular}

\section{Proof of Theorem~\ref{elbo}}
\label{elbo_proof}

\begin{proof}
    By the d-separation~\citep{pearl1988probabilistic} of paths on the causal graph defined in Figure~\ref{causal_diagram}, we have
    \begin{align}
        \log \left[ p (Y' | Y, \mathpzc{G}, X, T, T') \right]
     & = \log \mathbb E_{p (Z | Y, \mathpzc{G}, X, T)} \left[ 
        p (Y' | Z, Y, \mathpzc{G}, X, T, T') \right] \\
     & \geq \mathbb E_{p (Z | Y, \mathpzc{G}, X, T)} \log \left[ 
        p (Y' | Z, Y, \mathpzc{G}, X, T, T') \right] \quad \text{(Jensen's ineq.)} \\
     & = \mathbb E_{p (Z | Y, \mathpzc{G}, X, T)} \log \frac{p (Y', Z | Y, \mathpzc{G}, X, T, T')}{p (Z | Y, \mathpzc{G}, X, T)} \\
     & = \mathbb E_{p (Z | Y, \mathpzc{G}, X, T)} \log \frac{p (Y', Z, Y | \mathpzc{G}, X, T, T')}{p (Z | Y, \mathpzc{G}, X, T) p (Y | \mathpzc{G}, X, T)} \\
     & = \mathbb E_{p (Z | Y, \mathpzc{G}, X, T)} \log \frac{p (Y | Z, T) p (Z | Y', \mathpzc{G}, X, T') p (Y' | \mathpzc{G}, X, T')}{p (Z | Y, \mathpzc{G}, X, T) p (Y | \mathpzc{G}, X, T)} \\
     & = \mathbb E_{p (Z | Y, \mathpzc{G}, X, T)} \log \left[ p (Y | Z, T) p (T | X) \right] \nonumber \\
     &\quad - D_\mathrm{KL} \left[ p (Z | Y, \mathpzc{G}, X, T) \parallel p (Z | Y', \mathpzc{G}, X, T') \right] \nonumber \\
     &\quad - D \left[ p (Y | \mathpzc{G}, X, T) \parallel p (Y' | \mathpzc{G}, X, T') \right].
    \end{align}
    Reorganizing the terms yields the desired result.
\end{proof}

\section{Marginal Effect Estimation}
\label{sec:ATT}

\subsection{Experiment}
\label{ATT-experiment}

To evaluate the marginal estimator $\hat{\Psi}_{\theta, \phi}$ in Equation~\ref{robust_est}, we compute $\hat{\Psi}_{\theta, \phi}$ for treatment $a$ and covariate level $c$ with samples from the training set and calculate its $R^2$ against the true average of the samples with treatment $a$ and covariate level $c$ in the validation set. We record the average $R^2$ of all treatment-covariate combo similar to Section~\ref{sec:ood-predictions}, and compare it (robust) to that of the regular empirical mean estimator (mean). Table~\ref{marginal-est} shows the results on Marson~\citep{schmidt2022crispr} episodically during training.

\begin{table}[ht!] 
  \caption{Comparison of marginal estimators on Marson}
  \label{marginal-est}
  \centering
  \scalebox{0.8}{
    \begin{tabular}{lcccc}
    \toprule
    & \multicolumn{2}{c}{All Genes} & \multicolumn{2}{c}{DE Genes}\\
    \cmidrule(r){2-3} \cmidrule(r){4-5}
    Episode & mean & robust & mean & robust \\
    \midrule
    40  & 0.9177 $\pm$ 0.0015 & \textbf{0.9329} $\pm$ \textbf{0.0008} & 0.7211 $\pm$ 0.0116 & \textbf{0.9048} $\pm$ \textbf{0.0040} \\
    80  & 0.9193 $\pm$ 0.0019 & \textbf{0.9337} $\pm$ \textbf{0.0008} & 0.7339 $\pm$ 0.0149 & \textbf{0.9076} $\pm$ \textbf{0.0043} \\
    120 & 0.9178 $\pm$ 0.0037 & \textbf{0.9340} $\pm$ \textbf{0.0008} & 0.7234 $\pm$ 0.0175 & \textbf{0.9104} $\pm$ \textbf{0.0038} \\
    160 & 0.9191 $\pm$ \textbf{0.0009} & \textbf{0.9356} $\pm$ 0.0010 & 0.7343 $\pm$ 0.0079 & \textbf{0.9175} $\pm$ \textbf{0.0034} \\
    \bottomrule
    \end{tabular}
  }
\end{table}

These runs reflects that the robust estimator was able to produce a more accurate estimation of the covariate-stratified marginal treatment effect $\mathbb E_p(Y' | X=c, T'=a)$ with a tigher confidence bound.

\subsection{Derivation}
\label{ATT-derivation}

By \citet{van2000asymptotic}, we derive the efficient influence function of $\Psi(p)$ and thus provides a mean for asymptotically efficient estimation:
\begin{theorem}
\label{variational-stratified-ATT}
Suppose $\mathpzc{W}: \Omega \rightarrow E$ follows a causal structure defined by the Bayesian network in Figure~\ref{causal_diagram}, where the counterfactual conditional distribution $p(Y', T' | Z, X)$ is identical to that of its factual counterpart $p(Y, T | Z, X)$. Then $\Psi(p)$ has the following efficient influence function:
\begin{align}
    \tilde{\psi}(p) = \frac{I(X = c, T = a)}{p(X, T)} (Y - \mathbb E_p\left[ Y | Z, T \right]) + \frac{I(X = c)}{p(X)} (\mathbb E_p\left[ Y' | Z, T'=a \right] - \Psi).
\end{align}
\end{theorem}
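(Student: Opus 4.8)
The plan is to obtain $\tilde{\psi}$ as the canonical gradient of $\Psi$ by differentiating along one‑dimensional parametric submodels and reading off the influence function, following the semiparametric recipe of \citet{van2000asymptotic}. First I would reduce $\Psi$ to an observed‑data functional. Treating a unit as $O=(X,Z,T,Y)$ (with $\mathpzc{G}$ fixed, hence not contributing to the statistical model), the Bayesian network of Figure~\ref{causal_diagram} factorizes the law as $p(O)=p(X)\,p(Z\mid X)\,p(T\mid X)\,p(Y\mid Z,T)$, where $T\perp Z\mid X$ by d-separation. Combining this factorization with the counterfactual equivalence $p(Y',T'\mid Z,X)=p(Y,T\mid Z,X)$ gives the g-formula identification
\begin{align}
\Psi(p)=\mathbb{E}_p[Y'\mid X=c,T'=a]=\int \bar{Q}(z,a)\,p(z\mid X=c)\,dz,\qquad \bar{Q}(z,a):=\mathbb{E}_p[Y\mid Z=z,T=a],
\end{align}
where the marginal $p(X=c)$ cancels between the numerator and the conditioning. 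This step is exactly where the counterfactual assumption is used, to turn the unobservable $Y'$ into the factual regression $\bar{Q}$.

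Next I would compute the pathwise derivative. For a submodel with score $s=s_X+s_{Z\mid X}+s_{T\mid X}+s_{Y\mid Z,T}$ (each conditionally mean‑zero), differentiating $\Psi(p_\epsilon)=\int \bar{Q}_\epsilon(z,a)\,p_\epsilon(z\mid X=c)\,dz$ at $\epsilon=0$ splits into exactly two contributions: one from perturbing $\bar{Q}$ (the $Y\mid Z,T$ factor) and one from perturbing $p(z\mid X=c)$ (the $Z\mid X$ factor). The $s_X$ and $s_{T\mid X}$ directions vanish because $\Psi$ does not depend on those factors. The main work is to re-express each contribution as an inner product $\mathbb{E}_p[\varphi_i(O)\,s(O)]$. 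For the first I would insert the indicator $I(X=c,T=a)$ and divide by $p(X=c,T=a)$ to change the integration measure from $p(z\mid X=c)\,p(y\mid z,a)$ to the full joint, producing $\varphi_1=\frac{I(X=c,T=a)}{p(X,T)}\big(Y-\bar{Q}(Z,a)\big)$; for the second, centering $\bar{Q}(Z,a)$ at $\Psi$ and weighting by $I(X=c)/p(X)$ yields $\varphi_2=\frac{I(X=c)}{p(X)}\big(\bar{Q}(Z,a)-\Psi\big)$. Applying $\bar{Q}(Z,a)=\mathbb{E}_p[Y'\mid Z,T'=a]$ rewrites these in the counterfactual form stated in the theorem.

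Finally I would certify efficiency, i.e.\ that $\tilde{\psi}=\varphi_1+\varphi_2$ is the canonical gradient rather than merely a gradient. For this I would verify that $\varphi_1$ lies in the $Y\mid Z,T$ tangent subspace (checking $\mathbb{E}_p[\varphi_1\mid X,Z,T]=0$) and that $\varphi_2$ lies in the $Z\mid X$ subspace ($\varphi_2$ depends only on $(X,Z)$ and satisfies $\mathbb{E}_p[\varphi_2\mid X]=0$), and that both are orthogonal to the remaining nuisance directions, which follows from $\mathbb{E}_p[\tilde{\psi}\mid X]=0$ and $\mathbb{E}_p[\tilde{\psi}\,s_{T\mid X}]=0$; each of these reduces to the conditional-mean-zero property of the residual $Y-\bar{Q}(Z,a)$ given $(X=c,Z,T=a)$ and of the centered regression. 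The delicate part throughout is the measure-change bookkeeping that produces the inverse-probability weights $1/p(X,T)$ and $1/p(X)$ with the correct indicators attached; once those are pinned down, the tangent-space membership and orthogonality checks are short conditional-expectation calculations.
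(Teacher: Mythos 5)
Your proposal is correct and follows essentially the same route as the paper: both compute the pathwise derivative of $\Psi(p_\epsilon)$ along parametric submodels in the style of \citet{van2000asymptotic}, split it into a $p(Y\mid Z,T)$-perturbation term and a $p(Z\mid X)$-perturbation term, use the counterfactual--factual equivalence to replace $(Y',T')$ by $(Y,T)$, and read off the influence function as the Riesz representer $\varphi_1+\varphi_2$. The only differences are organizational --- you decompose the score along the Bayesian-network factorization and add an explicit tangent-space/canonical-gradient check, whereas the paper carries a single generic score $S(\mathpzc{D})$ through iterated conditional expectations and relies on the nonparametric model to make the gradient canonical --- and these do not change the substance of the argument.
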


\begin{proof}
    Following \citet{van2000asymptotic}, we define a path $p_\epsilon(\mathpzc{W})=p(\mathpzc{W})(1+\epsilon S(\mathpzc{W}))$ on density $p$ of $\mathpzc{W}$ as a submodel that passes through $p$ at $\epsilon=0$ in the direction of the score $S(\mathpzc{W})=\frac{d}{d\epsilon} \log \left[p_\epsilon(\mathpzc{W})\right] \Big{\rvert}_{\epsilon=0}$. Let $\mathpzc{X} = (\mathpzc{G}, X)$ and minuscule of a variable denote the value it takes. By \citet{levy2019tutorial}, we have
    \begin{align}
        \frac{d}{d\epsilon} \Psi(p_\epsilon) \Big{\rvert}_{\epsilon=0} &= \frac{d}{d\epsilon} \Big{\rvert}_{\epsilon=0} \mathbb E_{p_\epsilon} \left[ \mathbb E_{p_\epsilon} \left[ Y' | Z, T'=a \right] | X=c \right] \\
        &= \frac{d}{d\epsilon} \Big{\rvert}_{\epsilon=0} \int_{y', z} y' \left[ p_\epsilon(y' | z, T'=a) p_\epsilon(z | X=c) \right] \\
        &= \int_{y', z} y' \frac{d}{d\epsilon} \Big{\rvert}_{\epsilon=0} \left[ p_\epsilon(y' | z, T'=a) p_\epsilon(z | X=c) \right] \quad \text{(dominated convergence)} \\
        &= \int_{y', z} y' p(z | X=c) \frac{d}{d\epsilon} \Big{\rvert}_{\epsilon=0} p_\epsilon(y' | z, T'=a) \\
        &+ \int_{y', z} y' p(y' | z, T'=a) \frac{d}{d\epsilon} \Big{\rvert}_{\epsilon=0} p_\epsilon(z | X=c) \\
        &= \int_\mathpzc{w} I(x=c, t'=a) \frac{p(\mathpzc{x}, t')}{p(x, t')} y' p(y, t | z, x) p(z | \mathpzc{x}) \frac{d}{d\epsilon} \Big{\rvert}_{\epsilon=0} p_\epsilon(y' | z, t')  \nonumber\\
        &\quad + \int_{y', z, \mathpzc{x}} I(x=c) \frac{p(\mathpzc{x})}{p(x)} y' p(y' | z, T'=a) \frac{d}{d\epsilon} \Big{\rvert}_{\epsilon=0} p_\epsilon(z | \mathpzc{x}) \\
        &= \int_\mathpzc{w} \frac{I(x=c, t'=a)}{p(x, t')} y' p(y, y', t, t' | z, x) p(z, \mathpzc{x}) \left\{ S(\mathpzc{w}) - \mathbb E\left[ S(\mathpzc{W}) | y, z, \mathpzc{x}, t, t' \right]\right\}  \nonumber\\
        &\quad + \int_{y', z, \mathpzc{x}} \frac{I(x=c)}{p(x)} y' p(y' | z, T'=a) p(z, \mathpzc{x}) \left\{ \mathbb E\left[ S(\mathpzc{W}) | z, \mathpzc{x} \right] - \mathbb E\left[ S(\mathpzc{W}) | \mathpzc{x} \right]\right\} \\
        &= \int_\mathpzc{w} \frac{I(x=c, t=a)}{p(x, t)} y p(y', y, t', t | z, x) p(z, \mathpzc{x}) \left\{ S(\mathpzc{w}) - \mathbb E\left[ S(\mathpzc{W}) | y', z, \mathpzc{x}, t', t \right]\right\}  \nonumber\\
        &\quad + \int_{y', z, \mathpzc{x}} \frac{I(x=c)}{p(x)} y' p(y' | z, T'=a) p(z, \mathpzc{x}) \left\{ \mathbb E\left[ S(\mathpzc{W}) | z, \mathpzc{x} \right] - \mathbb E\left[ S(\mathpzc{W}) | \mathpzc{x} \right]\right\} \\
        &= \int_\mathpzc{w} S(\mathpzc{w}) \cdot \frac{I(x=c, t=a)}{p(x, t)} y p(\mathpzc{w}) \nonumber\\
        &\quad - \int_\mathpzc{w} \mathbb E\left[ S(\mathpzc{W}) | y', z, \mathpzc{x}, t, t' \right] p(y', z, \mathpzc{x}, t, t') \cdot \frac{I(x=c, t=a)}{p(x, t)} y p(y | z, t) \nonumber\\
        &\quad + \int_{y', z, \mathpzc{x}} \mathbb E\left[ S(\mathpzc{W}) | z, \mathpzc{x} \right] p(z, \mathpzc{x}) \cdot \frac{I(x=c)}{p(x)} y' p(y' | z, T'=a) \nonumber\\
        &\quad - \int_{y', z, \mathpzc{x}} \mathbb E\left[ S(\mathpzc{W}) | \mathpzc{x} \right] \cdot \frac{I(x=c)}{p(x)} y' p(y' | z, T'=a) p(z, \mathpzc{x}) \\
        &= \int_\mathpzc{w} S(\mathpzc{w}) \left\{\frac{I(x=c, t=a)}{p(x, t)} (y - \mathbb E\left[ Y | z, t \right]) \right. \nonumber\\
        &\quad + \left. \frac{I(x=c)}{p(x)} (\mathbb E\left[ Y' | z, T'=a \right] - \Psi) \right\} p(\mathpzc{w})
    \end{align}
    by assumptions of Theorem~\ref{variational-stratified-ATT} and factorization according to Figure~\ref{causal_diagram}. Hence 
    \begin{align}
        \frac{d}{d\epsilon}\Psi(p_\epsilon) \Big{\rvert}_{\epsilon=0} &= \left \langle S(\mathpzc{W}), \frac{I(X = c, T = a)}{p(X, T)} (Y - \mathbb E_p\left[ Y | Z, T \right]) \right. \nonumber \\
        &\quad \left. + \frac{I(X = c)}{p(X)} (\mathbb E_p\left[ Y' | Z, T'=a \right] - \Psi) \right \rangle_{L^2(\Omega; E)}
    \end{align}
    and we have $\tilde{\psi}_p = \frac{I(X = c, T = a)}{p(X, T)} (Y - \mathbb E_p\left[ Y | Z, T \right]) + \frac{I(X = c)}{p(X)} (\mathbb E_p\left[ Y' | Z, T'=a \right] - \Psi)$.
\end{proof}

By Theorem~\ref{variational-stratified-ATT}, we propose the following estimator that is asymptotically efficient among regular estimators under some regularity conditions~\citep{van2006targeted}:
\begin{align}
    \hat{\Psi}_{\theta, \phi} &= \frac{1}{n} \sum_{k=1}^n \left\{ \frac{I(T_k = a, X_k = c)}{\hat{p}(T_k | X_k) \hat{p}(X_k)} \left[Y_k - \mathbb E_{p_\theta}(Y' | \tilde{Z}_{k, \phi}, T_k'=a) \right] \right. \nonumber \\
    &\quad + \left. \frac{I(X_k = c)}{\hat{p}(X_k)} \mathbb E_{p_\theta}(Y' | \tilde{Z}_{k, \phi}, T_k'=a) \right\}
\end{align}
where $(Y_k, X_k, T_k)$ are the observed variables of the $k$-th individual and $\tilde{Z}_{k,\phi} \sim q_\phi (Y_k, \mathpzc{G}, X_k, T_k)$; $\hat{p}(T | X)$ is an estimation of the propensity score and $\hat{p}(X)$ is an estimation of the density of $X$. In the context of this work, $X$ and $T$ are discrete, hence $\hat{p}(T | X)$ and $\hat{p}(X)$ can be estimated by the empirical density $p_n(T | X)$ and $p_n(X)$. The above estimator then reduces to:
\begin{align}
    \hat{\Psi}_{\theta, \phi} &= \frac{1}{n_{a,c}} \sum_{k=1_{a,c}}^{n_{a,c}} \left\{ Y_k - \mathbb E_{p_\theta}(Y' | \tilde{Z}_{k, \phi}, T_k'=a) \right\} \nonumber \\
    &\quad + \frac{1}{n_c} \sum_{k=1_c}^{n_c} \left\{ \mathbb E_{p_\theta}(Y' | \tilde{Z}_{k, \phi}, T_k'=a) \right\}
\end{align}
where $(1_c, \dots, n_c)$ are the indices of the observations having $X=c$ and $(1_{a,c}, \dots, n_{a,c})$ are the indices of the observations having both $T=a$ and $X=c$.

\section{Complexity Analysis}
\label{complexity_analysis}

Overall, the time complexity of VCI compared to a generic framework like VAE (or the time complexity of graphVCI compared to a generic GNN framework like GLUE \citep{cao2022multi} does not over-scale on any factor of any parameter – to put it simply, the workflow of VCI is just twice the forward passes of an VAE for a batch of inputs, with an additional distribution loss which is implemented on the same scale $O(r)$ as other losses. Comparing graphVCI to VCI, graphVCI has additionally a few graph operations. We give a thorough analysis of the time complexity of each layer in our experiments below:
\begin{itemize}
    \item Generic method AE has 2 MLP layers with $O(r d)$ number of operations and 4 layers of $O(d^2)$ number of operations. $d$ is the number of hidden neurons.
    \item VCI has the same layer sizes: 2 MLP layers with $O(r d)$ number of operations and 4 layers of $O(d^2)$ number of operations. But every layer is forward passed twice.
    \item graphVCI has 1 GNN layer with $O(r v d^2_g + p d_g r^2)$ number of operations. $v$ is number of gene features, $d_g$ is number of hidden neurons for GNN (usually a lot less than $d$ since $v \ll r$) and $p$ is the sparsity of GRN (number of connections divided by $r^2$, usually around $1\%$), and the following layers which are forward passed twice: 1 MLP layer and 1 dot-product operation each with $O(r d)$ number of operations, 1 MLP layer with $O(r d_g (d_g +d))$ number of operations and 3 MLP layers with $O(d^2)$ number of operations.
\end{itemize}

So the terms to be concerned compared to AE and VCI are $O(r v d^2_g)$, $O(p d_g r^2)$ and $O(r d_g d)$. Since $p$ is around $1\%$ and $r$ is $2000$ in our experiment, $p d_g r^2 \approx 20 d_g r$ hence the second term is comparably smaller than the other two terms. Therefore, as long as $d_g$ is set to be reasonably small compared to $d$, the graph approach is reasonably scaled compared to pure MLP approaches. We note that this is a limitation of ours and GNN approaches in general: if there is a much high number of genes $r$ than 2000 to be considered, or a high number of gene features $v$ for each gene (which results in that $d_g$ has to be higher), GNN methods does not scale favorably compared to MLP methods.

\section{Experiments Details}

\subsection{Differentially-Expressed Genes}
\label{appendix-deg}
To evaluate the predictions on the genes that were substantially affected by the perturbations, we select sets of 50 differentially-expressed genes associated with each perturbation and separately report performance on these genes. The same procedure was carried out by \citet{lotfollahi2021compositional}.

\subsection{Out-of-Distribution Selections}
\label{appendix-ood}
We randomly select a covariate category (e.g. a cell type) and hold out all cells in this category that received one of the twenty perturbations whose effects are the hardest to predict. We use these held-out data to compose the out-of-distribution (OOD) set. We computed the euclidean distance between the pseudobulked gene expression of each perturbation against the rest of the dataset, and selected the top twenty most distant ones as the hardest-to-predict perturbations. This is the same procedure carried out by \citet{lotfollahi2021compositional}.

\subsection{Training Setup}
\label{appendix-train}
The data excluding the OOD set are split into training and validation set with a four-to-one ratio. A few additional cell attributes available in each dataset are selected as cell covariates. For Sciplex, they are cell type and replicate; for Marson, they are cell type, donor and stimulation; for L008, they are cell type and donor. All these covariates are categorical and transformed into discrete indicators before passing to the models. All common hyperparameters of all models (network width, network depth, learning rate, decay rate, etc.) are set to the same as \citet{lotfollahi2021compositional}.
More details regarding hyperparameter settings can be found in our code repository\textsuperscript{$\dagger$}.
\customfootnotetext{$\dagger$}{\url{https://github.com/yulun-rayn/graphVCI}}

\end{document}